\setlist{leftmargin=*, nosep}
\renewcommand\footnotetextcopyrightpermission[1]{}
\begin{document}

\title[Resolving Conflicts in Clinical Guidelines using Argumentation]{Resolving Conflicts in Clinical Guidelines using Argumentation}


%
\author{Kristijonas \v Cyras}
\orcid{0000-0002-4353-8121}
\affiliation{
 \institution{Imperial College London}
  \streetaddress{South Kensington Campus}
  \city{London} 
  \country{United Kingdom}
  \postcode{SW7 2AZ}
}
\email{k.cyras@imperial.ac.uk}
\author{Tiago Oliveira}
\affiliation{
  \institution{National Institute of Informatics}
  \streetaddress{2-1-2 Hitotsubashi, Chiyoda-ku}
  \city{Tokyo} 
  \country{Japan}
  \postcode{101-8430}
}
\email{toliveira@nii.ac.jp}

\begin{abstract}
Automatically reasoning with conflicting generic clinical guidelines is a burning issue in patient-centric medical reasoning where patient-specific conditions and goals need to be taken into account. 
It is even more challenging in the presence of preferences such as patient's wishes and clinician's priorities over goals. 
We advance a structured argumentation formalism for reasoning with conflicting clinical guidelines, patient-specific information and preferences.
Our formalism integrates assumption-based reasoning and goal-driven selection among reasoning outcomes.  
Specifically, we assume applicability of guideline recommendations concerning the generic goal of patient well-being, 
resolve conflicts among recommendations using patient's conditions and preferences, 
and then consider prioritised patient-centered goals to yield non-conflicting, goal-maximising and preference-respecting recommendations.
We rely on the state-of-the-art Transition-based Medical Recommendation model for representing guideline recommendations and augment it with context 
given by the patient's conditions, goals, as well as preferences over recommendations and goals. 
We establish desirable properties of our approach in terms of sensitivity to recommendation conflicts and patient context.
\end{abstract}

\keywords{Medical reasoning; Structured argumentation; Ariadne principles} 

\maketitle

\section{Introduction}
\label{sec:intro}


Medical reasoning involves careful deliberation about the condition of a patient and possible treatments. 
Clinical guidelines provide best practice recommendations for achieving patient well-being given a disease and describe management of a generic patient, recommending multiple options to choose among, given a concrete patient and their context. 
When managing multiple health conditions (multimorbidities), guidelines need to be merged, 
whence multiple interactions must be considered, as they influence the evolution of the patient \cite{Grace.et.al:2013,Fraccaro.et.al:2015}. 
In particular, the recommended actions may be inapplicable, conflicting, overlapping, and so forth.
It is hard for clinicians to follow the best practices in the presence of conflicts among guidelines. 
In such settings, knowledge representation methods come handy, 
particularly in representation of guidelines and their interactions. 

Transition-based Medical Recommendation model (TMR) \cite{Zamborlini.et.al:2017} is a state-of-the-art \cite{Riano:Ortega:2017} development in representation of clinical guideline recommendations. 
TMR identifies components and relations typically present in multimorbidity situations, 
such as clinical care actions and their effects on physical properties. 
To capture interactions when merging guidelines, In \cite{Zamborlini.et.al:2017} a mechanism is advanced to identify relationships among multiple recommendations, such as contradiction, repetition, alternative. 
TMR offers a comprehensive template for clinical guidelines and their interactions.
Yet, TMR does not afford a method for representing patient-specific information. 
More importantly, TMR does not afford \emph{reasoning} mechanisms to determine which recommendations to follow given a patient. 

In general, whereas representation of clinical guidelines is a well-advanced area, 
automated reasoning with those representations, especially in the presence of conflicts, is a limiting factor \cite{Peleg:2013,Fraccaro.et.al:2015,Riano:Ortega:2017}.
An additional hurdle is taking into account the context of the patient, 
pertaining to patient-specific conditions, patient-centric goals and preferences from various parties involved \cite{Peleg:2013,Sacchi.et.al:2015,Vermunt.et.al:2017}. 
For instance, Ariadne principles \cite{Muth.et.al:2014} show the importance and difficulty of integrating interaction assessment, individual management and patient's and/or clinician's preferences in multimorbidity setting.
We here advance a framework to address the above mentioned issues by applying an argumentation-based method 
to allow an autonomous agent to reason with conflicting clinical guidelines in the context of patient information, goals and various preferences.

Generally speaking, argumentation allows to reason with incomplete and conflicting information in a way that aims to emulate human reasoning. 
Argumentation is used for modelling reasoning of autonomous agents in multi-agent systems, 
see e.g.\ \cite{Parsons:Sierra:Jennings:1998,Kakas:Moraitis:2003,Rahwan:Simari:2009,Bench-Capon:Atkinson:McBurney:2012},
and has been extensively applied in the medical domain, 
see e.g.\ \cite{Fox.et.al:2006,Tolchinsky:Cortes:Modgil:Caballero:Lopez-Navidad:2006,Hunter:Williams:2012,Longo:2016,Oliveira:et.al:2018,Cyras:Oliveira:2018}.
In medical reasoning particularly, ``argumentation is appealing as it allows for important conflicts to be highlighted and analysed and unimportant conflicts to be suppressed.'' \cite{Atkinson.et.al:2017} 
We propose to use the structured argumentation (see e.g.~\cite{Tutorials:2014}) formalism 
\emph{Assumption-Based Argumentation with Preferences} (\emph{\abap}) \cite{Bondarenko:Dung:Kowalski:Toni:1997,Cyras:Toni:2016-KR} 
for automating patient-centric reasoning with conflicting guideline recommendations and preferences. 
This choice is motivated by the simplicity of knowledge representation in \abap\ and its dealing with preferences differently than other formalisms: 
it reverses attacks due to preferences and, importantly, in doing so preserves conflict-freeness of sets of assumptions; 
these aspects allow for a leaner representation while yielding desirable properties, as demonstrated further ahead.
\abap\ also has known complexity results \cite{Dimopoulos:Nebel:Toni:2002,Lehtonen.et.al:2019-AAAI} and a working implementation \cite{Bao:Cyras:Toni:2017}.

\abap\ is deployed to use TMR for representation of recommendations and interactions via rules and arguable elements from which arguments (as deductions) are constructed. 
\abap\ augments this representation with patient-specific information and uses extension-based argumentation semantics for reasoning. 
It thus provides an assumption-driven reasoning method whereby assumed applicability of recommendations is argued about using patient's conditions.
At the same time, \abap\ deals with preferences over actions (equivalently, recommendations) as specified by e.g.~the patient. 
We establish that the resulting recommendations (corresponding to extensions of \abap\ frameworks) are non-conflicting. 
We also augment \abap\ to form \abapg, incorporating a goal-driven reasoning mechanism to determine the best non-conflicting recommendations given patient-centric goals ordered by importance. 
This allows our approach to meet Ariadne principles. 
We also illustrate our approach with a case study from \cite{Zamborlini.et.al:2017} and obtain arguably desirable outcomes when reasoning with conflicting recommendations in the context of a patient. 

The paper is structured thus. 
In Section \ref{sec:principles} we consider desiderata for our approach in terms of patient management principles from medical literature. 
We then in Section \ref{sec:setting} describe the problem of reasoning with interacting recommendations in the context of a patient. 
In Section \ref{sec:aba+} we propose to use \abap\ and its development \abapg\ for assumption-based patient-centric reasoning with recommendations, goals and preferences. 
We discuss related work in Section \ref{sec:related} 
and finish with conclusions and future work in Section \ref{sec:future}.

\section{Principles of Patient Management}
\label{sec:principles}


We situate our work in the context of principles of patient management in multimorbidity setting. 
Several works mention various principles for patient management \cite{Peleg:2013,Grace.et.al:2013,Fraccaro.et.al:2015,Sacchi.et.al:2015,Vermunt.et.al:2017}, but do so in a loose and fragmented manner. 
Hence, it is difficult to acquire a comprehensive understanding of what these principles should be. 
\cite{Muth.et.al:2014} is among the very few works with a comprehensive enumeration and description of such principles, called \textbf{Ariadne principles}. 
Our interpretation of them is as follows.

\begin{enumerate}[1.]
    \item \textbf{Interaction assessment:} recommendation interactions and respective effects are identified and resolved. 
    In contrast to patients with a single disease, when managing patients with multimorbidities, 
    a variety of potential interactions between diseases and treatments may occur and worsen the course of the disease(s).
    
    \item \textbf{Prioritisation and patient preferences:} to guide the reasoning, priorities among goals are established while respecting the patient's preferences and state. 
    These priorities and preferences are used to consolidate heavy treatment burdens and competing treatment goals. 
    Treatment  goals are expressed in terms of symptom relief, disease prevention, avoidance of undesired outcomes, and preservation or improvement of life expectancy and quality.
    
    \item \textbf{Individualised management:} a treatment plan as a set of recommendations is devised in accordance with the patient's state, preferences and the prioritised goals. 
    This plan should provide non-conflicting recommendations for the given patient.
\end{enumerate}

Ariadne principles do not provide specific methods to solve recommendation conflicts or to elicit preferences, but rather state which dimensions should be accounted for while reasoning. 
Regarding treatment goals, information about the impact of treatments on life expectancy and quality of life may not (or is often not) available. 
Therefore, limiting treatment goals to symptom relief, disease prevention, avoidance of undesired outcomes---i.e., 
effects brought about (or not) by treatments---seems to be the most practical choice in these circumstances.
Furthermore, the patient's preferences over actions and the clinician's priorities over goals should result from a discussion between the patient and the physician, 
and need to be taken into account when devising a treament plan for the patient. 

For an autonomous agent to reason with conflicting medical recommendation within the TMR model, we need to establishing foundations for that reasoning in a setting of patient management. 
TMR provides an expressive representation template for clinical guideline recommendations in discordant multimorbidity, respective interaction types, and several measures such as causation belief, deontic strength, and evidence level. 
However, it does not demonstrate the aggregation of these elements in reasoning to produce treatment solutions for specific patients. 
Devising such solutions is no trivial task  not only due to the complexity brought about by the number of existing health conditions and recommendations but also due to the overall under-specification of how decisions should be made in this setting. 
In this work, we aim to meet Ariadne principles by adequately handling conflicting guideline recommendations afforded by the TMR model, while taking into account the context that includes patient-specific conditions, goals and preferences. 

\section{Problem Setting}
\label{sec:setting}


We here situate 
the problem of reasoning with interacting clinical guideline recommendations in the context of a patient. 
We first review the Transition-based Medical Recommendation (TMR) model 
and interactions among recommendations. 
We then discuss the context of a patient.

\subsection{TMR Model}
\label{subsec:tmr}

We first give the TMR model together with guideline recommendation interaction representation. 
They will be used to construct \abap\ frameworks for reasoning with guidelines.
(As in \cite{Zamborlini.et.al:2017}, we assume that a set of guidelines is merged into a single guideline so that recommendations are delivered by the same larger guideline.)

\subsubsection{Recommendations}
\label{subsubsec:recom}

\begin{figure*}[!ht]
\centering
\includegraphics[width=\textwidth]{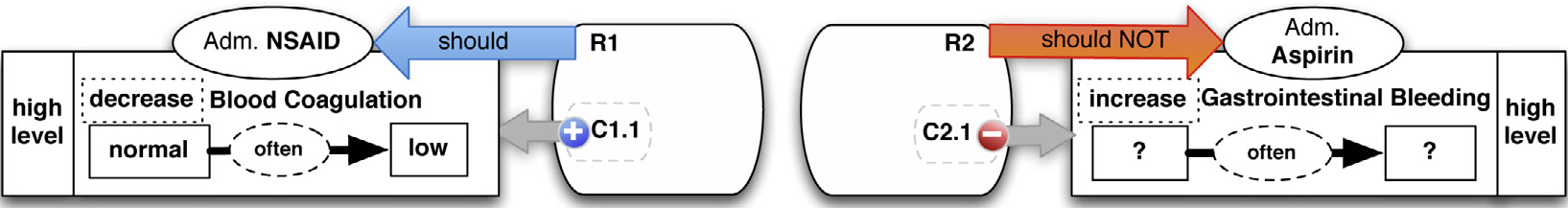}
\caption{TMR representation schema instantiated with recommendations $\rec[1]$ and $\rec[2]$ \cite[p.~83, Figure 2]{Zamborlini.et.al:2017}.}
\label{fig:schema}
\end{figure*}

Figure \ref{fig:schema} depicts an instance of a graphical schema for representing recommendations in TMR. 
(Recommendation concerning NSAID is taken from a diabetes guideline, and recommendation concerning Aspirin is taken from an osteoarthritis guideline.)
It consists of the following components.\footnote{
The formal description of recommendations, 
with components as functions/relations, is long, cumbersome, and unnecessary for the purposes of this paper. 
Instead, we give an intermediate representation, 
following the alternative formal description (and visualisation) in \cite{Zamborlini.et.al:2017} of TMR instances, 
faithful to the original but omitting certain aspects 
(as indicated below), which carries the necessary aspects required in this work.}

\begin{enumerate}[label=\bf(\roman*)]
	\item \textbf{Name}, e.g.~$\rec[1]$, $\rec[2]$, at the top of a rounded box.

(We write $\rec[k]$ instead of $\rec k$.) 

Henceforth, we refer to a recommendation by its name.

	\item A unique associated \textbf{action} $\act$, e.g.~$\aspirin$, $\nsaid$ 
(where \textit{Adm.} stands for Administer).

	\item \textbf{Deontic strength}, which we denote by $\ds$, is indicated by a thick labelled arrow and ``reflects a degree of obligatoriness expected for that recommendation'' \cite[p.~82]{Zamborlini.et.al:2017}. 
It takes values in $\left[-1, 1\right]$: 
if $\ds \geqslant 0$, then $\rec$ recommends to perform the action; 
if $\ds < 0$, then $\rec$ recommends to avoid the action. 
To discretise $\ds$, the qualitative landmarks 
\emph{must}, \emph{should}, \emph{may}, \emph{should not}, \emph{must not} 
corresponding to $1$, $0.5$, $0$, $-0.5$, $-1$, respectively, are used.
E.g., the deontic strengths of $\rec[1]$ and $\rec[2]$ in Figure \ref{fig:schema} are $\ds[1] = 0.5 = \should$ and $\ds[2] = -0.5 = \shouldnot$, respectively. 

	\item \textbf{Properties} that the associated action affects, e.g.~$\coagulation$, $\bleeding$.
(If clear from the context, we abbreviate words as follows: 
e.g.~\textit{Coag.} and \textit{Gastro.} abbreviate \textit{Coagulation} and \textit{Gastrointestinal}, respectively.) 

In general, an action can affect more than one property $\prop$. 

	\item \textbf{Effects} of the actions, e.g.~$\decrease$, $\increase$. 

An action $\act$ has one effect $\eff$ on the property $\prop$ it affects. 

	\item \textbf{Initial} and \textbf{final} values of the property that an action affects. 
For instance, \nsaid\ leads to a \decrease\ in \coagulation\ from the initial value $\normal$ to the final value $\low$. 
Otherwise, $?$ represents \emph{indeterminate} value. 

In this paper we \emph{will not} make use of, but mention for completeness, two quantitative values associated with an effect:
\emph{causation probability} -- e.g.~$\often$ -- representing the likelihood of the action bringing the effect about; 
and \emph{belief strength} -- e.g.~$\highlevel$ -- representing the level of evidence regarding bringing the effect about. 

	\item \textbf{Contributions} of the recommendation to the overall goals in the context of a guideline, e.g.~$+C1.1$, $-C2.1$, indicated below the recommendation name. 

A recommendation can have multiple contributions, 
each carrying an \emph{identifier}, e.g.~$C1.1$, $C2.1$, 
and \emph{valued} in $[-1, 1]$ 
(indicating importance of achieving/avoiding the corresponding effect), 
discretised with signs $+$, $-$ and no sign, representing values greater than, less than and equal to $0$, respectively.
\end{enumerate}

An instance of TMR concerns a generic patient. 
In order to apply recommendations, one needs to consider specific patient \emph{conditions}, 
pertaining to properties and the initial values of the effects that actions have on properties. 
For instance, a patient can have conditions $\normal~\coagulation$ or $\bleeding$. 
When using argumentation frameworks to reason with guidelines in Section \ref{sec:aba+}, 
patient conditions will come as information additional to TMR instances. 
With the following intermediate representation of TMR instances we ensure that recommendations as well as patient-specific conditions will be representable in argumentation frameworks. 

\begin{definition}
\label{defn:recom}
A \textbf{recommendation} is a tuple $\recom$ consisting of the following components:
\begin{enumerate}[label=\bf(\roman*)]
\item name $\rec$, 
\item action $\act$, 
\item deontic strength $\ds$, 
\item properties $\props = \langle \prop[1], \ldots, \prop[n] \rangle$ affected, for $n \geqslant 1$, 
\item effects $\effs = \langle \eff[1], \ldots, \eff[n] \rangle$, 
\item initial values $\vals = \langle \val[1], \ldots, \val[n] \rangle$ of effects on properties, 
\item contribution values $\contributions = \langle \contribution[1], \ldots, \contribution[n] \rangle$.
\end{enumerate}
\end{definition}

We identify any recommendation with its name $\rec$ and with an abuse of notation may write $\rec =$ $\recom$. 
We use $\recs$ to denote a fixed but otherwise arbitrary set of recommendations, 
unless specified otherwise.

\begin{example}
\label{ex:recom}
Recommendations $\rec[1] =$ $( \rec[1], \nsaid, \should,
\break \langle \coagulation \rangle, \langle \decrease \rangle, \langle \normal \rangle, \langle + \rangle )$ and
$\rec[2] =$ \linebreak $( \rec[2], \aspirin,  \shouldnot, \langle \bleeding \rangle, \langle \increase \rangle, \langle ? \rangle, \langle - \rangle )$ 
are illustrated in Figure \ref{fig:schema}. 
So $\recs = \{ \rec[1], \rec[2] \}$. 
\end{example}

\subsubsection{Interactions}
\label{subsubsec:int}

Using TMR, \citeauthor{Zamborlini.et.al:2017} identify \emph{interactions} among recommendations. 
Intuitively, interactions record the relationships between different recommendations, for instance, a \emph{contradiction} relationship in case one recommendation urges to avoid the action suggested by another recommendation. 
Several types of interactions, such as contradiction, repetition, alternative, can be identified. 
We focus on the contradiction interaction in this paper, because it relates recommendations  in direct conflict that can be naturally resolved by means of argumentation. 

Contradiction interactions can be represented as triples $(\rec, \rec', \ms)$ with recommendations $\rec$ and $\rec'$, 
and the interaction's \emph{modal strength} $\ms$, which reflects the conclusiveness of the interaction. 
The interaction's modal strength can take two values, denoted by $\Box$ and $\Diamond$, 
where $\Box$ means `the interaction will certainly occur if the related recommendations are prescribed' \cite{Zamborlini.et.al:2017}
and $\Diamond$ means `the interaction is uncertain to happen'. 
Formally, we define:

\begin{definition}
\label{defn:int}
A \textbf{contradiction interaction} between recommendations $\rec, \rec' \in \recs$ is a tuple $(\rec, \rec', \ms)$, 
where $\ms \in \{ \Box, \Diamond \}$ is the \emph{modal strength} of the interaction. 
\end{definition}

$\ints$ denotes the set of all contradiction interactions given $\recs$.

\begin{example}
\label{ex:int}
The recommendations $\rec[1]$ and $\rec[2]$ from Example \ref{ex:recom} are in a contradiction interaction, 
as they recommend opposite actions.\footnote{Note well that a hierarchy of actions is assumed in \cite[p.~79]{Zamborlini.et.al:2017} to obtain interactions. 
For instance, the action to administer NSAID subsumes both actions to administer Aspirin and Ibuprofen. 
This hierarchy is not important for our purposes.} 
We assume that $\ints = \{ (\rec[1], \rec[2], \Box) \}$.
\end{example}

The possibility to identify interactions gives rise to the following notion of \emph{contradiction-free} sets of recommendations. 

\begin{definition}
\label{defn:contradiction-free}
A set $\recs' \subseteq \recs$ is \textbf{contradiction-free} iff there is no contradiction interaction $\int \in \ints$ with $\rec[i], \rec[j] \in \recs'$. 
\end{definition}

Intuitively, contradiction-free sets of recommendations consist of recommendations that are safe to follow without the risk of performing incompatible actions. 

\begin{example}
\label{ex:contradiction-free}
The set $\recs = \{ \rec[1], \rec[2] \}$ from Example \ref{ex:recom} is not contra\-diction-free, 
for $(\rec[1], \rec[2], \Box) \in \ints$, as in Example \ref{ex:int}. 
Clearly, $\{ \rec[1] \}$ and $\{ \rec[2] \}$ themselves are contradiction-free.
\end{example}

Our representation of recommendations and interactions as afforded by the TMR model will contribute to our approach meeting the $1^{st}$ and the $3^{rd}$ Ariadne principles as laid down in Section \ref{sec:principles}.

\subsection{Context}
\label{subsec:context}

Recommendations $\recs$ and interactions $\ints$ amount only to representation of guidelines, but not reasoning with them. 
In particular, they give a patient-agnostic representation, while the reasoning happens with patient-specific information. 

\begin{example}
\label{ex:setting}
Consider $\recs = \{ \rec[1], \rec[2] \}$ and $\ints = \{ (\rec[1], \rec[2], \Box) \}$ as in Examples \ref{ex:recom} and \ref{ex:int}. 
Intuitively, for a generic patient, NSAID -- e.g.\ Aspirin -- should be administered. 
If, however, the patient exhibits \bleeding, then \rec[1] and \rec[2] are in conflict and there are arguments for both administering and not administering Aspirin. 
\end{example}

The patient information can be understood as the \emph{context} in which reasoning happens (see e.g.~\cite{Sacchi.et.al:2015}).
To resolve the conflict in Example \ref{ex:setting}, 
one could administer a different NSAID, such as Ibuprofen. 
However, in more complicated situations such alternatives may not be available. 
In those situations, \emph{preferences} may be a part of the context that help to resolve the conflicts argumentatively. 

\begin{example}
\label{ex:preferences}
Continuing Example \ref{ex:setting}, suppose that only Aspirin is available. 
The patient may insist that medication should be given to them, thus preferring taking Aspirin over not taking it, whence only \rec[1] should be followed. 
On the other hand, if the patient expresses no preferences, the clinician's priorities may come into play. 
For instance, the clinician may deem not increasing the risk of gastrointestinal bleeding more important than decreasing blood coagulation, whence only \rec[2] would be followed. 
\end{example}

Thus, the context includes not only the patient's state, but also various preferences. 
For instance:
\begin{enumerate*}[a)]
\item the patient may prefer one course of action over another;
\item the clinician may prioritise treatments in accordance with patient-centric goals and their importance.
\end{enumerate*}
The TMR model however does not afford representation of such preferences, just as it does not afford representation of patient-specific conditions. 
One of our tasks is to augment the representation of recommendations and interactions with the context of a patient so as to enable patient-centric reasoning with clinical guidelines. 
For this purpose, we define the context pertaining to patient information with respect to recommendations as follows.

\begin{definition}
\label{defn:context}
The \textbf{context} (of a fixed but otherwise arbitrary patient) is a tuple $\context$ with: 
the patient's state $\pstates$, the patient-centric goals $\goals$, the (patient's) preferences $\leqslant$ over actions, the (clinician's) priorities $\preccurlyeq$ over goals.\footnote{Following Ariadne principles, we distinguish between preferences over actions and priorities over goals for ease of reference.}
\end{definition}

In the rest of the paper we assume that a context is compatible with given recommendations in the following sense:
the patient's state $\pstates$ matches some of the properties within recommendations; 
the goals $\goals$ match the (un)desired effects on those properties; 
the patient's preferences are (represented by a preorder) over the recommended actions or recommendations;
the clinician's priorities are (represented by a total preorder) over the effects on the patient's state.
We make this precise in Section \ref{subsec:reasoning}.

\begin{example}
\label{ex:context}
Building on Examples and \ref{ex:setting} and \ref{ex:preferences}, 
the context of the patient can be given by $\pstates = \{ \bleeding \}$, $\goals = \{ \decrease~\coagulation, \lpnot\increase~\bleeding \}$,\footnote{$\lpnot$ is purely syntactic, representing the desire to avoid the effect on the property brought about by the action.\label{footnote:not}} $\rec[2] < \rec[1]$,\footnote{As usual, 
the strict (asymmetric) counterpart $<$ of a preorder $\leqslant$ is given by 
$\alpha < \beta$ iff $\alpha \leqslant \beta$ and $\beta \nleqslant \alpha$, 
for any $\alpha$ and $\beta$. 
We assume this for all preorders in this paper.} and  $\decrease~\coagulation \prec \lpnot\increase~\bleeding$.
\end{example}

The elements together form a context for the application of recommendations and ground them to a particular setting. 
The context of a patient will contribute to our approach meeting the $2^{st}$ and the $3^{rd}$ Ariadne principles put forward in Section \ref{sec:principles}.

\section{Reasoning with Guidelines}
\label{sec:aba+}


We will use guideline recommendations, their interactions and contexts to construct argumentation frameworks for an agent to reason and resolve conflicts among recommendations, 
given patient-specific conditions, patient-centric goals and various preferences. 
Specifically, we will use \abap\ frameworks for assumption-based reasoning with guidelines and patient's preferences over recommendations. 
We will then augment \abap\ to \abapg\ for goal-driven reasoning with guidelines and clinician's priorities over goals.

\subsection{\texorpdfstring{\abap} ~ Background}
\label{subsec:aba+ background}

We provide the background for \abap\ following \cite{Bondarenko:Dung:Kowalski:Toni:1997,Cyras:Toni:2016-KR}.

\label{definition:ABA framework}
An \textbf{\abap\ framework} is a tuple $\abapf$, where:
\begin{itemize}[label=$\bullet$]
\item $(\LL, \R)$ is a deductive system with $\LL$ a language 
and $\R$ a set of rules of the form 
$\varphi_0 \ot \varphi_1, \ldots, \varphi_m$ with $m \geqslant 1$,
or of the form $\varphi_0 \ot \top$, 
where $\varphi_i \in \LL$ for $i \in \{ 0, \ldots, m \}$ and $\top \not\in \LL$; 
$\varphi_0$ is the \emph{head} or \emph{conclusion}, and $\varphi_1, \ldots, \varphi_m$ the \emph{body} of the rule;
$\varphi_0 \ot \top$ is said to have an empty body and called a \emph{fact};

\item $\A \subseteq \LL$ is a non-empty set of \emph{assumptions};
\item $\contrary : \A \to \LL$ is a total map: 
for $\asma \in \A$, $\contr{\asma}$ is referred to as the \emph{contrary} of $\asma$;
\item $\leqslant$ is a preorder (i.e.~reflexive and transitive order) on $\A$, called a \emph{preference relation}.
\end{itemize}

For $\asma, \asmb \in \A$, $\asma \leqslant \asmb$ means that $\asmb$ is at least as preferred as $\asma$, 
and $\asma < \asmb$ means that $\asma$ is strictly less preferred than $\asmb$. 

Throughout, we assume a fixed but otherwise arbitrary \abap\ framework $\F = \abapf$, unless else specified. 

Assumptions in \abap\ represent arguable information. 
For instance, assumptions can represent the applicability of, or an agent's willingness to follow, a recommendation. 
In such a case, preferences in \abap\ can represent the willingness to follow recommendations. 

We next give notions of arguments and attacks in \abap. 

An \textbf{argument for conclusion $\varphi \in \LL$ supported by $\asmA \subseteq \A$ and $R \subseteq \R$}, 
denoted $\asmA \vdash^R \varphi$, is a finite tree with: 
the root labelled by $\varphi$; 
leaves labelled by $\top$ or assumptions, with $\asmA$ being the set of all such assumptions; 
the children of non-leaves $\psi$ labelled by the elements of the body of some $\psi$-headed rule in $\R$, with $R$ being the set of all such rules. 
$\asmA \vdash \varphi$ abbreviates $\asmA \vdash^R \varphi$ with some $R \subseteq \R$. 

For $\asmA, \asmB \subseteq \A$, 
$\asmA$ \textbf{$<$-attacks} $\asmB$, 
denoted $\asmA \pattacks \asmB$,
iff:
\begin{enumerate}[label=\alph*), leftmargin=*]
\item either there is an argument $\asmA' \vdash \contr{\asmb}$, for some $\asmb \in \asmB$, 
supported by $\asmA' \subseteq \asmA$, and $\nexists \asma' \in \asmA'$ with $\asma' < \asmb$;
\item or there is an argument $\asmB' \vdash \contr{\asma}$, for some $\asma \in \asmA$, 
supported by $\asmB' \subseteq \asmB$, and $\exists \asmb' \in \asmB'$ with $\asmb' < \asma$.
\end{enumerate}
The intuition here is that $\asmA$ $<$-attacks $\asmB$ if
\begin{enumerate*}[a)]
\item either $\asmA$ argues contra something in $\asmB$ by means of no inferior elements (\emph{normal attack}), 
\item or $\asmB$ argues contra something in $\asmA$ but with at least one inferior element (\emph{reverse attack}).
\end{enumerate*}

If $\asmA$ does not $<$-attack $\asmB$, we may write $\asmA \npattacks \asmB$. 
Note that without preferences, an attack from one set of assumptions to another boils down to the former set deducing the contrary of some assumption in the latter set. 

We next give notions used to define \abap\ semantics. 

Let $\asmA \subseteq \A$. 
The \emph{conclusions of $\asmA$} is the set of sentences  
$\Cn(\asmA) = \{ \varphi \in \LL~:~\exists~\asmA' \vdash \varphi, \ \asmA' \subseteq \asmA \}$ 
concluded by (arguments supported by subsets of) $\asmA$. 
We say $\asmA$ is \emph{closed} if $\asmA = \Cn(\asmA) \cap \A$, i.e.~$\asmA$ contains all assumptions it concludes. 
We say \F\ is \emph{flat} if every $\asmA \subseteq \A$ is closed. 
We assume \abap\ frameworks to be flat in this paper.

Further: 
$\asmA$ is \emph{$<$-conflict-free} if $\asmA \npattacks \asmA$; 
also, $\asmA$ \emph{$<$-defends} $\asmA' \subseteq \A$ if  $\forall~\asmB \subseteq \A$ with $\asmB \pattacks \asmA'$ we have $\asmA \pattacks \asmB$; 
and $\asmA$ is \emph{$<$-admissible} if it is $<$-conflict-free and $<$-defends itself. 
We consider one \abap\ semantics: 
a set $\asmE \subseteq \A$ of assumptions is a \textbf{$<$-preferred extension} of $\F = \abapf$ if $\asmE$ is $\subseteq$-maximally $<$-admissible.


\subsection{\texorpdfstring{\abapg: \abap} ~ with Goals}
\label{subsec:abapg}

We extend \abap\ with a mechanism to distinguish among preferred extensions based on goals fulfilled. 
\citeauthor{Oliveira:et.al:2018} introduce goal seeking mechanisms in structured argumentation  to rank argument extensions according to their relative priorities \cite{Oliveira:et.al:2018}.
We import this goal-driven reasoning into \abap\ to define \abapg, and thus cover the important aspect of reasoning with patient-centric goals.

\begin{definition}
\label{defn:abapgf}
An \textbf{\abapg\ argumentation framework} is a tuple $\abapgf$, where 
$\abapf$ is an \abap\ framework
and
\begin{itemize}
	\item $\G \subseteq \LL$ is a finite set of \textbf{goals} such that $\forall~\theta \in \G$, there exists $\theta \leftarrow \varphi_1, \ldots, \varphi_m$ with $m\leqslant1$ in $\R$;

	\item $\preccurlyeq$ is a total preorder on $\G$, denoting \textbf{priorities} over goals; 
for $\theta, \chi \in \G$, $\theta \preccurlyeq \chi$ means $\chi$ is as important as $\theta$.	
\end{itemize}
\end{definition}
In what follows, $\abapgf$ is a fixed but otherwise arbitrary \abapg\ framework, unless said otherwise. 

In \abapg\ concluding goals amounts to fulfilling them. 
We hence define (preferred) \emph{goal extensions} in terms of goal-conclusions thus:

\begin{definition}
\label{defn:goal extension}
Let $\asmE$ be a $<$-preferred extension of $\abapf$. 
Then $\goals[\asmE] = \Cn(\asmE) \cap \goals$ is a \textbf{goal extension} of $\abapgf$.
\end{definition}

In other words, a goal extension consists of the goals concluded by a $<$-preferred extension. 
We use priorities over goals to rank goal extensions 
and define \abapg\ semantics:

\begin{definition}
\label{defn:abapg semantics}
Let $\mathbb{G}$ be the set of goal extensions.
The \emph{goal extension ordering} $\unlhd_{\mathbb{G}}$ over $\mathbb{G}$ is given by 
\begin{quote}
$\goals[\asmA] \unlhd_{\mathbb{G}} \goals[\asmB]$ ~ iff ~ $\exists \theta \in \goals[\asmB]\setminus\goals[\asmA]$ with $\chi \preccurlyeq \theta$ ~ $\forall \chi \in \goals[\asmA]\setminus\goals[\asmB]$.
\end{quote}

$\goals \in \mathbb{G}$ is a \textbf{top goal extension} iff $\nexists \goals' \in \mathbb{G}$ such that $\goals \lhd_{\mathbb{G}} \goals'$.
\end{definition}

Note that $\unlhd_{\mathbb{G}}$ is a total preorder, as $\preccurlyeq$ is a total preorder. 
Intuitively, $\goals[\asmA] \unlhd_{\mathbb{G}} \goals[\asmB]$ means that $\goals[\asmB]$ is at least as `good' as $\goals[\asmA]$. 
The underlying principle behind ordering goal extensions is trying to fulfill goals according to their importance. 

A top goal extension admits no strictly `better' goal extension. 
Intuitively, a $<$-preferred \abap\ extension inducing a top goal extension yields the best reasoning outcome.

This choice of ordering is motivated by the requirements of a patient management setting, 
within which priorities over goals may convey a sense of urgency and severity that must be addressed when reasoning. 
Hence, we assume that an agent should always aim to fulfill the top preferred goals, 
regardless of the goals with lower priorities. 
In general, preference aggregation is a rich and complex area of research. 
Other orderings could be applied, see e.g.\ \cite{Kaci:Patel:2014} for a comparison of various orderings, 
but we chose the above one in accordance to our interpretation of priorities over goals.


\subsection{Reasoning in \texorpdfstring{\abapg}~}
\label{subsec:reasoning}

We now introduce the representation in \abapg\ of TMR instances, interactions and context. 

\subsubsection{Intuition}
\label{subsubsec:intuition}

At a high-level: 
assumptions will represent (the defeasible applicability of) recommendations, 
the corresponding actions and their effects on properties will be modelled via rules, 
and the deontic strength will determine both whether the actions and their consequences are sought after or not. 
The context will be modelled via facts representing patient's state, 
goals matching the effects of actions, 
patient's preferences over assumptions 
and clinician's priorities over goals. 

For a step by step illustration, we use recommendations $\recs = \{ \rec[1], \rec[2] \}$ and interactions $\ints = \{ (\rec[1], \rec[2], \Box) \}$ as in Example \ref{ex:setting}.
First, $\rec[1], \rec[2] \in \A$ represent the (defeasible applicability of) recommendations. 
The following rules then represent the actions recommended (or not) by $\rec[1]$ and $\rec[2]$:
\begin{enumerate}[label={\bf\arabic*.}, series=main]
\item $\nsaid \ot \rec[1]$;
\item $\lpnot\aspirin \ot \rec[2]$.
\end{enumerate}

The following rules model the effects the actions $\nsaid$ and $\aspirin$ bring about:
\begin{enumerate}[resume*=main]
\item $\decrease~\coagulation \ot \nsaid$;
\item $\increase~\bleeding \ot \aspirin$.
\end{enumerate}

As $\rec[2]$ recommends $\lpnot\aspirin$, the following rule represents the effect to be avoided by following $\rec[2]$:
\begin{enumerate}[resume*=main]
\item $\lpnot\increase~\bleeding \ot \lpnot\aspirin$.
\end{enumerate}

Now, $\rec[1]$ and $\rec[2]$ are in contradiction with \nsaid\ and \aspirin\ recommended positively and negatively, respectively. 
Thus, $\rec[2]$ can be argued against on the basis of $\rec[1]$ and the presence of the contradiction. 
However, $\rec[1]$ can be similarly argued against on the basis of $\rec[2]$ and the presence of the contradiction, 
but \emph{only as long as} the given patient has \bleeding. 
Therefore, we have:
\begin{enumerate}[resume*=main]
\item $\contr{\rec[2]} \ot \rec[1], int_{1, 2}$;
\item $\contr{\rec[1]} \ot \rec[2], int_{1, 2}, \bleeding$.
\end{enumerate}

Here, $\contr{\rec[1]}$ and $\contr{\rec[2]}$ are the contraries of $\rec[1]$ and $\rec[2]$, respectively, 
and $int_{1, 2} \in \LL$ represents $(\rec[1], \rec[2], \ms) \in \ints$. 
These rules say that: \\
$\rec[2]$ should not be followed if 
\begin{enumerate}[(i)]
\item $\rec[1]$ is followed, and
\item $\rec[1]$ and $\rec[2]$ are in contradiction;
\end{enumerate}
$\rec[1]$ should not be followed if 
\begin{enumerate}[(i)]
\item $\rec[2]$ is followed, 
\item $\rec[1]$ and $\rec[2]$ are in contradiction, \emph{and also}
\item the condition $\bleeding$ is present.
\end{enumerate}

This is in accordance with the desirable reading of interactions as in Section \ref{sec:setting} and in \cite[p.~91]{Zamborlini.et.al:2017}.

The interaction's modal strength $\ms$ determines whether it is an assumption (i.e.~could be argued about) or a fact (i.e.~sure to happen):
\begin{enumerate}[a)]
\item if $\ms = \Box$, let $int \ot \top \in \R$;
\item if $\ms = \Diamond$, let $int \in \A$.
\end{enumerate}
In our example, $\ms = \Box$, so we have:

\begin{enumerate*}[resume*=main]
\item $int_{1, 2} \ot \top$.
\end{enumerate*}

Given context $\context$, the patient's state $\pstates$ yields properties/initial value-property pairs as facts. 
With context from Example \ref{ex:context}, $\bleeding \in \pstates$ yields:

\begin{enumerate*}[resume*=main]
\item $\bleeding \ot \top$.
\end{enumerate*}

Lastly, as in Example \ref{ex:context}, 
goals $\goals$ represent (un-)desired effects on properties, 
patient's preferences $\leqslant$ are over recommendations as assumptions 
and clinician's priorities are over goals. 

\subsubsection{Formalisation}
\label{subsubsec:formalisation}

Formally, mapping recommendations, interactions and context to \abapg\ goes as follows.

\begin{definition}
\label{defn: abapg patient framework}
Given recommendations $\recs$, interactions $\ints$ and context $\context$, 
\textbf{the \abapg\ patient framework} is defined as  $\F_{p} = \abapgf$, where:

\begin{itemize}
\item $\A = \{ \rec : \recom \in \recs \} \cup \{ int_{i, j} : \intD \in \ints \}$ consists of assumptions representing recommendations and uncertain interactions; 

\item $\R_a = \R^+_a \cup \R^-_a$ consists of rules representing actions associated to recommendations, where
	\begin{itemize}
	\item $\R^+_a = \{ \act \ot \rec~:~\rec \in \recs, ~ \ds \geqslant 0 \}$,
	\item $\R^-_a = \{ \lpnot \act \ot \rec~:~\rec \in \recs, ~ \ds < 0 \}$;\footnote{$\lpnot$ is purely syntactic (see footnote \ref{footnote:not}).}
	\end{itemize}

\item 
$\R_e = \R^+_e \cup \R^-_e$ consists of rules representing effects brought about by actions, where
	\begin{itemize}
	\item $\R^+_e = \{ \eff[k]\prop[k] \ot \act~:~\rec \in \recs, ~ \eff[k] \in \effs, ~ \prop[k] \in \props, ~ \ds \geqslant 0 \}$, 
	\item $\R^-_e = \{ \lpnot \eff[k]\prop[k] \ot \lpnot \act~:~\rec \in \recs, ~ \eff[k] \in \effs, ~ \prop[k] \in \props, \linebreak \ds < 0 \}$;
	\end{itemize}

\item 
$\R_s = \{ \val\prop \ot \top~:~\val\prop \in \pstates \}$ consists of facts representing the patient's state $\pstates$ in terms of properties and their values, 
where $\pstates \subseteq \bigcup_{\rec \in \recs} \{ \val[k]\prop[k]~:~\prop[k] \in \props, \val[k] \in \vals \}$;

\item $\R_c = \R^+_c \cup \R^-_c$ consists of rules representing (contradiction) interactions between recommendations, 
where
	\begin{itemize}
	\item $\R^+_c = \{ \contr{\rec[j]} \ot \rec[i], int_{i, j}~:~\int \in \ints, ~ \ds[i] \geqslant 0 \}$, 
	\item rules in $\R^-_c = \{ \contr{\rec[i]} \ot \rec[j], int_{i, j}, \val[k]_j\prop[k]_j~:~\int \in \ints, \linebreak~ \ds[j] < 0, ~ \prop[k]_j \in \props[j], ~ \val[k]_j \in \vals[j], ~ \contribution[k]_j \in \contributions[j], ~ \contribution[k]_j = - \}$ take into account presence of negatively affected conditions;
	\end{itemize}

\item $\R = \R_a \cup \R_e \cup \R_s \cup \R_c \cup \{ int_{i, j} \ot \top~:~\intB \in \ints \}$
consists of rules defined above and rules representing interactions that are sure to happen;

\item $\leqslant$ is a preorder over $\A$; 

\item $\G=\G^+ \cup \G^-$ satisfies
		
\begin{itemize}
\item $\G^+ \subseteq \bigcup_{\rec \in \recs} \{ \eff[k]\prop[k]~:~\prop[k] \in \props, \eff[k] \in \effs \}$, 

\item $\G^- \subseteq \bigcup_{\rec \in \recs} \{ \lpnot\eff[k]\prop[k]~:~\prop[k] \in \props, \eff[k] \in \effs \}$, 
\end{itemize}

\item $\preccurlyeq$ is a total preorder over $\G$;

\item By convention, $\LL$ and $~\contrary$ are implicit from $\A$ and $\R$ as follows:
unless $\contr{\asmx}$ appears in either $\A$ or $\R$, 
it is different from the sentences appearing in $\A$ or $\R$; 
thus, $\LL$ consists of all the sentences appearing in $\R$, $\A$ 
and $\{ \contr{\asma}~:~\asma \in \A \}$.
\end{itemize}
\end{definition}

Regarding interactions and rules in $\R_c$, 
suppose recommendations $\rec[i]$ and $\rec[j]$ are in contradiction with actions $\act[i]$ and $\act[j]$ recommended positively ($\ds[i] > 0$) and negatively ($\ds[j] < 0$), respectively. 
On the one hand, $\rec[j]$ can be argued against on the basis of $\rec[i]$ and the presence of the interaction. 
On the other hand, $\rec[i]$ can be similarly argued against on the basis of $\rec[j]$ and the presence of the interaction, 
but \emph{only as long as} a given patient will have some condition affected by $\act[j]$ that contributes negatively to the patient's well-being. 
Thus, we take into account any property $\prop \in \props[j]$ with initial value $\val \in \vals[j]$ and contribution $- = \contribution \ni \contributions[j]$. 
When the initial value $\val$ of $\prop$ is indeterminate $?$, we use only $\prop$.

\subsubsection{Properties}
\label{subsubsec:properties}

Modelling recommendations and interactions argumentatively 
allows to exploit properties of \abap\ to ensure desirable features of our approach. 
Specifically, the $<$-preferred extensions in \abapg\ patient frameworks are contradiction-free (Definition \ref{defn:contradiction-free}) as sets of recommendations 
(recall that we identify a recommendation with its name, see remark after Definition \ref{defn:recom}):

\begin{theorem}[\textbf{Interaction Theorem}]
\label{thm:cf}
For a $<$-preferred extension $\asmE$ of $\abapf$ in $\abapgf$, 
$\asmE \cap \recs$ is a contra\-diction-free set of recommendations.
\end{theorem}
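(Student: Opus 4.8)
The plan is to argue by contradiction. Suppose $\asmE \cap \recs$ is not contradiction-free, so by Definition~\ref{defn:contradiction-free} there is a contradiction interaction $(\rec[i], \rec[j], \ms) \in \ints$ with both $\rec[i], \rec[j] \in \asmE$. By the construction of $\F_p$ (Definition~\ref{defn: abapg patient framework}), such an interaction pairs a positively recommended action ($\ds[i] \geqslant 0$) with a negatively recommended one ($\ds[j] < 0$), and contributes the rule $\contr{\rec[j]} \ot \rec[i], int_{i,j} \in \R^+_c$. The aim is to turn the coexistence of $\rec[i]$ and $\rec[j]$ in $\asmE$ into a $<$-attack of $\asmE$ on itself, contradicting the $<$-conflict-freeness that $\asmE$ enjoys as a $<$-preferred, hence $<$-admissible, extension.

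The heart of the argument is a reversal observation that I would isolate as a lemma: whenever there is an argument $S \vdash \contr{x}$ with $S \subseteq \asmE$ and $x \in \asmE$, then $\asmE \pattacks \asmE$. Indeed, either no element of $S$ is strictly less preferred than $x$, in which case the normal-attack clause yields a self-attack (take $\asmA' = S$, $\asmb = x$); or some $s \in S$ satisfies $s < x$, in which case the reverse-attack clause yields a self-attack (take $\asma = x$, $\asmb' = s$). Since these two possibilities are exhaustive, the conclusion holds regardless of how the patient's preferences rank the assumptions involved. This is precisely the feature of \abap\ that preserves conflict-freeness under preferences, and it is what makes the theorem go through independently of the preference order on $\rec[i]$ and $\rec[j]$.

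With this lemma in hand, the certain case $\ms = \Box$ is immediate: $int_{i,j}$ is then a fact, so $\{\rec[i]\} \vdash \contr{\rec[j]}$ is an argument supported inside $\asmE$ with $\rec[j] \in \asmE$, and the lemma gives $\asmE \pattacks \asmE$, the required contradiction. The uncertain case $\ms = \Diamond$, where $int_{i,j}$ is an assumption rather than a fact, is where I expect the main difficulty to lie. The same reasoning works verbatim provided $int_{i,j} \in \asmE$, since then the support $\{\rec[i], int_{i,j}\}$ again sits inside $\asmE$. The obstacle is to dispose of the remaining possibility $int_{i,j} \notin \asmE$, for then the deduction of $\contr{\rec[j]}$ no longer lies within $\asmE$ and the clean self-attack is unavailable.

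I would handle this residual case through admissibility rather than conflict-freeness. Consider the attacker $\asmB = \{\rec[i], int_{i,j}\}$, which derives $\contr{\rec[j]}$ and hence, whenever no element of $\asmB$ is below $\rec[j]$, $<$-attacks $\asmE$; because $int_{i,j} \notin \asmE$ and $\contr{int_{i,j}}$ is never a rule head, $\asmE$ can neither derive a contrary of any member of $\asmB$ nor mount a reverse attack, so it fails to $<$-defend $\rec[j]$, contradicting admissibility. The delicate point, and the part I would treat most carefully, is the interplay with preferences: if instead $\rec[i] < \rec[j]$, the attack from $\asmB$ is itself reversed, and one must appeal to the symmetric rule $\contr{\rec[i]} \ot \rec[j], int_{i,j}, \val[k]_j\prop[k]_j \in \R^-_c$ together with the presence of the negatively contributing condition to exhibit a successful attacker on $\rec[i]$. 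Carrying out this preference case analysis cleanly, and pinning down exactly which assumptions on the condition $\val[k]_j\prop[k]_j$ are needed, is the crux of the proof.
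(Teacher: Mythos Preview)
Your overall strategy and your ``reversal lemma'' are exactly right and match the paper's argument: once you have an argument $S \vdash \contr{x}$ with $S \subseteq \asmE$ and $x \in \asmE$, the two clauses of $<$-attack guarantee $\asmE \pattacks \asmE$ regardless of preferences. The paper uses precisely this to conclude from $\contr{\rec[j]} \ot \rec[i], int_{i,j}$ that $\asmE$ self-attacks.

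Where you diverge is in the treatment of the uncertain case $\ms = \Diamond$. You split on whether $int_{i,j} \in \asmE$ and then, for the sub-case $int_{i,j} \notin \asmE$, launch an admissibility argument that---as you yourself flag---ultimately needs the patient condition $\val[k]_j\prop[k]_j$ to be present in order to fire the $\R^-_c$ rule. That condition is \emph{not} among the hypotheses of the theorem, so this branch cannot be closed in general; the gap you identify as ``the crux'' is in fact unclosable along that route.

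The paper avoids this entirely with a single observation you are missing: $int_{i,j}$ is a $<$-\emph{unattacked} assumption. No rule has $\contr{int_{i,j}}$ as its head, so there is no normal attack on $\{int_{i,j}\}$; and $\{int_{i,j}\}$ by itself supports no argument for any contrary, so there is no reverse attack on it either. Since $<$-preferred extensions contain all $<$-unattacked assumptions, $int_{i,j} \in \asmE$ whenever it is an assumption. Hence the residual case $int_{i,j} \notin \asmE$ never arises, and the whole proof reduces to the clean case you already handled: $\{\rec[i], int_{i,j}\} \subseteq \asmE$ (or $\{\rec[i]\} \subseteq \asmE$ when $int_{i,j}$ is a fact) derives $\contr{\rec[j]}$ with $\rec[j] \in \asmE$, and your lemma gives the self-attack.
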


\begin{proof}
Suppose $\asmE \cap \recs$ is not contradiction-free. 
Then there is $\int \in \ints$ with $\rec[i], \rec[j] \in \asmE$. 
But as $\contr{\rec[j]} \ot \rec[i], int_{i, j} \in \R$ and $int_{i, j}$ is either a fact or an $<$-unattacked assumption, 
we find $\asmE \pattacks \asmE$. 
This contradicts $<$-conflict-freeness of $\asmE$. 
\end{proof}

Thus, 
top goal extensions (induced by $<$-preferred extensions) in \abapg\ are guaranteed to yield goals achievable without the risk of performing incompatible actions. 

Another property states that if the patient expresses preferences over all recommendations, 
then the most preferred non-conflicting recommendations will be followed:

\begin{theorem}[\textbf{Preferences Theorem}]
\label{thm:pref}
Let $\leqslant$ be total over $\A \cap \recs$ and the set $\recs' = \{ \rec \in \A~:~\nexists \rec' \in \A \text{ with } \rec < \rec' \}$ of the most preferred recommendations be contradiction-free. 
Then $\recs' \subseteq \asmE$ for every $<$-preferred extension $\asmE$ of $\abapf$ in $\abapgf$. 
\end{theorem}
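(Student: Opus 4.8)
The plan is to prove the stronger statement that $\asmE \cup \recs'$ is $<$-admissible, and then conclude: since a $<$-preferred extension is $\subseteq$-maximally $<$-admissible and $\asmE \subseteq \asmE \cup \recs'$, admissibility of $\asmE \cup \recs'$ forces $\asmE \cup \recs' = \asmE$, i.e.\ $\recs' \subseteq \asmE$. Two structural facts drive everything. (i) The contrary $\contr{\rec}$ of a recommendation is the head only of rules in $\R_c$, so every argument for $\contr{\rec}$ carries in its support a recommendation $\rec''$ in contradiction with $\rec$ (and contraries of interaction assumptions are underivable). (ii) As $\abap$ reverses rather than deletes preference-inferior attacks, $<$-conflict-freeness is preference-independent: a set is $<$-conflict-free exactly when no subset of it derives the contrary of one of its elements. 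I read $\recs'$ as the most preferred \emph{recommendations} (so $\recs' \subseteq \recs$) and take the patient's preferences to range over recommendations only, so that interaction assumptions are $\leqslant$-incomparable to recommendations, hence never strictly below any $\rec$ and never in $\recs'$.

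The first lemma I would prove is the key preference fact: any recommendation $\rec''$ in contradiction with some $s \in \recs'$ satisfies $\rec'' < s$. Since $\leqslant$ is total over $\A \cap \recs$, $\rec''$ and $s$ are comparable; $s$ is $\leqslant$-maximal, so $s \not< \rec''$; and were they equally preferred, $\rec''$ would also be maximal, i.e.\ $\rec'' \in \recs'$, contradicting contradiction-freeness of $\recs'$. From (i) and this lemma I would record two consequences. First, no set $<$-attacks any $Z \subseteq \recs'$: a normal attack is blocked by the inferior contradicting recommendation forced into the attacker's support, and a reverse attack by the $\leqslant$-maximality of $\recs'$, which leaves no inferior element available in a $\recs'$-support. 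Second, symmetrically, $\asmE$ cannot $<$-attack into $\recs'$, i.e.\ $\asmE \npattacks Z$ for every $Z \subseteq \recs'$.

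For conflict-freeness of $\asmE \cup \recs'$, by (ii) it suffices to exclude a subset deriving the contrary of a member, which by (i) reduces to excluding a contradicting pair of recommendations whose $\R_c$-support lies inside $\asmE \cup \recs'$. Suppose $S \vdash \contr{z}$ with $z \in \asmE \cup \recs'$, $S \subseteq \asmE \cup \recs'$, and $S$ containing a recommendation $z''$ contradicting $z$. If $z \in \recs'$, then $z'' < z$, so $z''$ (not maximal) and the auxiliary interaction (never in $\recs'$) both lie in $\asmE$; hence $S \subseteq \asmE$, and $\{z\}$ reverse-$<$-attacks $S$ through $z'' < z$, so $\recs' \pattacks \asmE$, which $\asmE$ cannot counter since $\asmE \npattacks \recs'$, contradicting admissibility. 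If instead $z \in \asmE$, then $z'' \in \asmE$ is impossible by the Interaction Theorem (Theorem~\ref{thm:cf}), while $z'' \in \recs'$ forces a normal $<$-attack on $z$ (as $z < z''$ leaves no inferior element in the support), once more an attack on $\asmE$ that $\asmE$ cannot return into $\recs'$. Either way we reach a contradiction, so $\asmE \cup \recs'$ is $<$-conflict-free.

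For defence, let $\asmB \pattacks \asmE \cup \recs'$ and aim for $\asmE \cup \recs' \pattacks \asmB$. A normal attack cannot target a member of $\recs'$ by the no-attack consequence above, so it targets a member of $\asmE$, giving $\asmB \pattacks \asmE$. A reverse attack exhibits an argument $S \vdash \contr{y}$, $y \in \asmB$, supported in $\asmE \cup \recs'$ with an inferior element $s < y$; since interaction assumptions are $\leqslant$-incomparable and $\recs'$-recommendations are $\leqslant$-maximal, $s$ must be a contradicting recommendation lying in $\asmE$, and with it the whole support $S \subseteq \asmE$, so again $\asmB \pattacks \asmE$. As $\asmE$ $<$-defends itself, $\asmE \pattacks \asmB$, and since $<$-attacks are monotone in the attacking set, $\asmE \cup \recs' \pattacks \asmB$; admissibility of $\asmE \cup \recs'$ follows. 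I expect the main obstacle to be exactly the reverse-attack machinery of $\abap$: unlike in abstract argumentation, an ``attack on $\asmE \cup \recs'$'' can be generated by $\asmE \cup \recs'$ itself mounting a preference-defeated attack, so both the conflict-freeness and the defence steps hinge on correctly locating the inferior supporting element and on the bookkeeping of the never-attacked, $\leqslant$-incomparable interaction assumptions.
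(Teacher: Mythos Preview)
Your proof is correct and shares the paper's core idea: each $\rec \in \recs'$ is $\leqslant$-maximal, hence $\{\rec\}$ is $<$-unattacked, which forces $\rec$ into every $<$-preferred extension. Where the paper's two-sentence sketch simply asserts that preferred extensions contain all unattacked assumptions (appealing to attacks coming ``from singleton sets''), you unfold this by directly verifying that $\asmE \cup \recs'$ is $<$-admissible, with explicit normal/reverse case analysis and careful bookkeeping of the interaction assumptions---more work, but also more rigorous, since the paper's singleton claim is literally imprecise once $\Diamond$-interactions put pairs $\{\rec'', int_{i,j}\}$ in the minimal support of contrary-deriving arguments.
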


\begin{proof}
Any $\rec \in \recs'$ is $\leqslant$-maximal, so $\{ \rec \}$ is $<$-unattacked. 
As all $<$-attacks come from by singleton sets, 
every $<$-preferred extension contains all the $<$-unattacked sets of assumptions, including $\recs'$.
\end{proof}

Theorems \ref{thm:cf} and \ref{thm:pref} ensure that \abapg\ meets the three Ariadne principles of 
\textit{interaction assessment}, 
\textit{prioritisation and patient preferences} and 
\textit{individualised management} 
when applied to patient-centric reasoning with conflicting medical recommendations.


\subsubsection{Illustration}
\label{subsubsec:illustration}

We exemplify our formalisation with a case study from \cite{Zamborlini.et.al:2017}, 
focusing on contradiction interactions between breast cancer (BC) and hypertension (HT) guidelines, 
and using the pertinent parts of the information 
(given in full in \cite[p.~87, Figure 5, p.~90, Table 9, p.~91, Table 10]{Zamborlini.et.al:2017}). 
Our results will be in agreement with the informal discussion on the case study in \cite{Zamborlini.et.al:2017}.

\begin{example}
\label{ex:aba+}
We assume a merged BC and HT guideline with:
\begin{itemize}
\item $(\rec[8], \highexercise, \shouldnot, \langle \bloodpress \rangle, \langle \increase \rangle, \langle ? \rangle, \break \langle - \rangle)$, 
\item $(\rec[4], \exercise, \mustnot, \langle \bodytemp \rangle, \langle \increase \rangle, \langle \high \rangle, \langle - \rangle)$, 
\item $(\rec[3], \lowexercise, \should, \props[3], \effs[3], \vals[3], \contributions[3])$, 
\item $(\rec[2], \stdexercise, \should, \props[3] \cup \langle \lymph \rangle, \effs[3] \cup \langle \increase \rangle, \break \vals[3] \cup \langle \present \rangle, \contributions[3] \cup \langle - \rangle)$, where

\begin{itemize}
\item $\props[3] = \langle \fatigue, \fitness, \pain \rangle$, 
\item $\effs[3] = \langle \decrease, \decrease, \decrease \rangle$, 
\item $\vals[3] = \langle \high, \high, \high \rangle$, 
\item $\contributions[3] = \langle +, +, + \rangle$.
\end{itemize}

\end{itemize}

For instance, \rec[8] says that one \shouldnot\ do \highexercise, because it negatively contributes by increasing \bloodpress; 
\rec[3] says that one \should\ do \lowexercise, because it positively contributes to decreasing \fatigue, \fitness\ and \pain\ from \high\ values. 

Thus, $\recs = \{ \rec[2], \rec[3], \rec[4], \rec[8] \}$ 
and the interactions identified are 
$\ints = \{ (\rec[2], \rec[4], \Box), ~ (\rec[3], \rec[4], \Box), ~ (\rec[2], \rec[8], \Box) \}$.

To illustrate reasoning with patient-specific conditions, goals and preferences, 
we assume, in addition to the case study of \cite{Zamborlini.et.al:2017}, \emph{patient A}. 
Let patient A exhibit increased \bloodpress\ (indeterminate value), and in addition have \high~\bodytemp\
Suppose patient A has also expressed an overall preference for not doing high intensity exercise: 
$\rec[2] < \rec[8]$, ~ $\rec[3] < \rec[8]$, ~ $\rec[4] < \rec[8]$.

After discussing with patient A, the clinician elaborates a list of patient-centric goals, thus:
\begin{itemize}
\item $\goals = \{ \decrease~\pain, ~ \lpnot\increase~ \bloodpress, ~ \decrease~\fatigue, \break \lpnot\increase~\bodytemp \}$.
\end{itemize}

Note that not all properties (and effects) from recommendations need be included in $\goals$: 
for instance, \lymph\ is not concerned. 
The prioritisation of goals may be motivated by several criteria. 
Their specification is outside the scope of this work, but an example is the severity of a condition over the property it is associated with.
For patient A, the pain level from BC is a significant considerable obstacle to daily life, impeding normal routine. 
Additionally, the clinician is concerned with the patient's high blood pressure. 
Thus, $\decrease~\pain$ is the strictly most important goal, 
followed by $\lpnot\increase~\bloodpress$, 
which is followed by the equally important $\decrease~\fatigue$ and $\lpnot\increase~\bodytemp$ 
Then $\preccurlyeq$ over $\goals$ is defined as follows:  $\decrease~\fatigue \preccurlyeq \lpnot \increase~\bodytemp$;
$\lpnot \increase~\bodytemp \preccurlyeq \decrease~\fatigue$;
 $\decrease~\fatigue \prec \linebreak \lpnot\increase~\bloodpress \prec \decrease~\pain$; 
and visualised thus:
\begin{figure}[h]
\centering
\begin{tikzpicture}[sibling distance=10em,
  every node/.style = {align=center},level distance = 2em]

 \node {\decrease~\pain}
    child { node {\lpnot \increase~\bloodpress} 
        child { node {\decrease~\fatigue}}
        child { node {\lpnot \increase~\bodytemp}}
        };
\end{tikzpicture}
\end{figure}

The associated \abapg\ framework $\F_p = \abapgf$:
\begin{itemize}
\item $\A = \{ \rec[2], \rec[3], \rec[4], \rec[8] \}$, 
\item $\R = \{ \stdexercise \ot \rec[2], ~~ \lowexercise \ot \rec[3], \\
\lpnot\exercise \ot \rec[4], ~~ \lpnot\highexercise \ot \rec[8] \} \cup \{ \\
\increase~\bodytemp \ot \stdexercise, ~~ \\
\decrease~\fatigue \ot \stdexercise, ~~ \\
\decrease~\pain \ot \stdexercise, ~~ \\
\decrease~\fatigue \ot \lowexercise, ~~ \\
\decrease~\pain \ot \lowexercise, ~~ \\
\lpnot~\increase~\bloodpress \ot \lpnot \highexercise, ~~ \\
\lpnot~\increase~\bodytemp \ot \lpnot \exercise \} \cup \\
\{ \bloodpress \ot \top, ~ \high~\bodytemp \ot \top \} \cup \\
\{ \contr{\rec[4]} \ot \rec[2], int_{2, 4}, ~~ \contr{\rec[2]} \ot \rec[4], int_{2, 4}, \high~\bodytemp, ~~ \\
\contr{\rec[4]} \ot \rec[3], int_{3, 4}, ~~ \contr{\rec[3]} \ot \rec[4], int_{3, 4}, \high~\bodytemp, ~~ \\
\contr{\rec[8]} \ot \rec[2], int_{2, 8}, ~~ \contr{\rec[2]} \ot \rec[8], int_{2, 8}, \bloodpress \} \cup \\
\{ int_{2, 4} \ot \top, ~~ int_{3, 4} \ot \top, ~~ int_{2, 8} \ot \top \},$
\item $\leqslant$, $\goals$ and $\preccurlyeq$ as above (and $\LL$ and $~\contrary$ as per convention).
\end{itemize}

All contradiction interactions are triggered, 
giving arguments: $\{ \rec[2] \} \vdash \contr{\rec[4]}$, $\{ \rec[2] \} \vdash \contr{\rec[8]}$, $\{ \rec[4] \} \vdash \contr{\rec[2]}$,  $\{ \rec[4] \} \vdash \contr{\rec[3]}$,  $\{ \rec[3] \} \vdash \contr{\rec[4]}$,  $\{ \rec[8] \} \vdash \contr{\rec[2]}$. 
These indicate which recommendations are contradicting which other ones. 
Then, patient preferences help to determine the `stronger' arguments and (non-) $<$-attacks: $\{ \rec[2] \} \pattacks \{ \rec[4] \}$; $\{ \rec[4] \} \pattacks \{ \rec[2] \}$; $\{ \rec[3] \} \pattacks \{ \rec[4] \}$; $\{ \rec[4] \} \pattacks \{ \rec[3] \}$; $\{ \rec[8] \} \pattacks \{ \rec[2] \}$; but $\{ \rec[2] \} \npattacks \{ \rec[8] \}$. 
We thus see that, in particular, $\rec[2]$ suggesting \stdexercise, 
contradicting $\rec[8]$ but being less preferred, 
does not stand as an argument against \highexercise\ suggested by $\rec[8]$.

\abap\ semantics then resolves the conflicts. 
Briefly, as $\rec[3]$ and $\rec[4]$ are mutually contradicting, but each non-interacting with $\rec[8]$, 
they can be followed alongside $\rec[8]$, 
which itself kicks out $\rec[2]$. 
Thus, 
$\{ \rec[3], \rec[8] \}$, $\{ \rec[4], \rec[8] \}$ are  $<$-preferred extensions. 
The former suggests \lowexercise\ and advises against \highexercise; 
the latter urges not to exercise at all.
The corresponding goal extensions:
\begin{itemize}
\item $\goals[{\{ \rec[3], \rec[8] \}}] = \{\decrease~\fatigue, ~ \lpnot\increase~\bloodpress, ~ \break \decrease~\pain \}$;
\item $\goals[{\{ \rec[4], \rec[8] \}}] = \{\decrease~\fatigue, ~ \lpnot\increase~\bodytemp,\break
\decrease~\pain \}$.
\end{itemize}

In both goal extensions $\goals[{\{ \rec[3], \rec[8] \}}]$ and $\goals[{\{ \rec[4], \rec[8] \}}]$ 
the most important goal $\decrease~\pain$ is fulfilled, 
and so is $\decrease~\fatigue$.
But only $\goals[{\{ \rec[3], \rec[8] \}}]$ fulfills the second most important goal $\lpnot\increase \break ~\bloodpress$, 
so it is strictly better: 
$\goals[{\{ \rec[4], \rec[8] \}}] \lhd_{\mathbb{G}} \goals[{\{ \rec[3], \rec[8] \}}]$. 
Consequently, $\goals[{\{ \rec[3], \rec[8] \}}]$ is the top goal extension. 
Accordingly, $\rec[3]$ \linebreak ($\lowexercise$) and $\rec[8]$ ($\lpnot \highexercise$) should be followed.
\end{example}

We showed how \abapg\ allows for assumption-based, goal-driven and preference-respecting reasoning with TMR recommendations and interactions, 
taking into account patient-specific conditions, preferences over recommendations and priorities over goals. 

\section{Related Work}
\label{sec:related}


Argumentation (with or without preferences) has been successfully applied in health care
(see e.g.~\cite{Longo:2016,Atkinson.et.al:2017} for overviews). 
For instance, in \cite{Hunter:Williams:2012}, evidence from clinical trials is manually extracted from guidelines and synthesised to form arguments for treatment superiority, with attacks among arguments with conflicting claims. 
Based on treatment outcome indicators and the importance of evidence, 
user-specified preferences over arguments and argumentation semantics 
are used to identify the acceptable arguments. 
The focus is determining superiority among treatments, not concerning guideline recommendations or conflict resolution among those. 
Instead, argument aggregation for reasoning with guidelines is used in \cite{Grando:Glasspool:Boxwala:2012}.
There, 
arguments correspond to statements in guidelines
and, for a single specified goal, confidence of arguments is aggregated to identify the acceptable arguments. 
The focus is enacting recommendations from a single guideline, rather than reasoning with conflicting recommendations from multiple guidelines.
Other works, e.g.~\cite{Fox.et.al:2006,Tolchinsky:Cortes:Modgil:Caballero:Lopez-Navidad:2006,Qassas:Fogli:Giacomin:Guida:2016}, integrate argumentation with preferences to help clinicians to construct, exchange and evaluate arguments for and against decisions, 
rather than to reason with guidelines. 

The recent CONSULT project \citep{Consult:2018} applies  argumentation to reason with guidelines and patient preferences for managing post-stroke patients. 
\citeauthor{Consult:2018} manually represent guidelines in first-order logic (FOL) and use argument schemes \citep{Walton:1996} to construct arguments. 
We believe that using FOL for guideline representation is ad-hoc, 
and instead use the well-established TMR model to represent guideline recommendations and identify their interactions, 
which we then map into \abapg. 
Further, \citeauthor{Consult:2018} use argumentation with preferences modelled as attacks on attacks \cite{Modgil:2009} to resolve conflicts among recommendations. 
We instead incorporate preferences directly in the construction of attacks in \abapg. 
Importantly, this aspect and the ability to model and reason with goals allows us to meet Ariadne principles of patient management. 
We leave formal comparison with \citep{Consult:2018} for future work.


\cite{Wilk.et.al:2017} is a recent non-argumentative approach to reasoning with interacting guidelines, patient conditions and preferences. 
There, guideline recommendations are represented as actionable graphs and mapped into first-order logic (FOL) rules, 
while patient conditions and preferences are represented as FOL revision operators. 
Reasoning (guideline mitigation) amounts to applying revision operators to account for patient-specific conditions and preferences, and then finding models of the resulting FOL theory. 
Our approach is different in both knowledge representation---TMR model is richer than the mitigation-specific FOL, 
and computation mechanism---model finding is undecidable as opposed to finding preferred extensions. 
We also believe argumentation-based reasoning to be more transparent, as one can inspect the arguments, attacks among them and their interplay with preferences, 
in contrast to interpreting workings and results of a FOL theorem prover. 

Other approaches to reasoning with guidelines 
(see \cite{Peleg:2013,Riano:Ortega:2017} for overviews) focus on execution of single guidelines, e.g.~\cite{Leonardi:et:al:2012,Shalom.et.al:2016}, 
or identification of incompatibilities among guidelines, 
e.g.~answer set programming is used in \cite{Spiotta:Terenziani:Dupre:2017} to check temporal conformance;
statistical preference learning is used in \cite{Tsopra:Lamy:Sedki:2018} to identify inconsistencies in antibiotic therapy guidelines.
Yet other works concern preference elicitation to facilitate shared (clinician-patient) decision making.
In particular, \citeauthor{Sacchi.et.al:2015} incorporate patients' preferences in terms of QALYs, utilities and costs into the shared decision making model \cite{Sacchi.et.al:2015}. 
In effect, they propose a framework that supports patient preference elicitation and integrates them with patient health record to feed into decision models (particularly, decision trees) so as to facilitate shared (clinician-patient) decision making.
This allows to better inform both the clinician and the patient about the alternatives, but does not afford automatic resolution of interacting (e.g.~conflicting) recommendations.
It would be interesting to see how this could inform knowledge representation in our approach.

Goal-driven argumentative decision making (possibly with preferences) has been explored,
e.g.~\cite{Dung:Thang:Toni:2008,Amgoud:Prade:2009,Muller:Hunter:2012,Zeng.et.al:2018-AAMAS}. 
The settings there do not apply to reasoning with guidelines. 
As for goal-driven argumentative decision-making, the approach of \cite{Amgoud:Prade:2009} concerns general multiple criteria decision making in argumentation with preferences via reasoning backwards from goals to arguments. 
A follow-up application-specific approach of \cite{Muller:Hunter:2012} affords goal-driven argumentative documentation, analysis and making of decisions. 
\abapg\ differs from these approaches particularly in the direction of reasoning---from arguments to goals, 
which is more similar to assumption-based reasoning with goals and preferences as in \cite{Dung:Thang:Toni:2008}; 
as well as in using preferences (over goals) to select among extensions, as in e.g.\ \cite{Amgoud:Vesic:2014,Wakaki:2014}. 
It would be interesting to investigate the formal relationships with all these works in the future. 

We note that an argumentative approach with context was recently proposed in \cite{Zeng.et.al:2018-AAMAS}, where context rules and primitives involving patient state properties are used to assert defeasibility of logical implications between decisions, attributes, and goals. 
Thus, context-sensitivity is an important and desirable property in both medical and argumentative settings, 
and we addressed it in this work.

\section{Conclusions and Future Work}
\label{sec:future}


We proposed \abap\ to reason with guidelines and patient context. 
We mapped guideline representation as TMR model to \abap, 
incorporated in \abap\ patient-specific conditions and preferences, 
and augmented \abap\ to \abapg\ so as to account for patient-centric goals and their importance. 
\abapg\ yields contradiction-free recommendations and associated achievable goals while respecting the context of the patient. Our approach meets Ariadne principles:  
\textit{interaction assessment} is ensured by Theorem \ref{thm:cf} (contradiction-free recommendations); 
\textit{prioritisation and patient preferences} is ensured by 
Definition \ref{defn:abapg semantics} (extensions fulfill the most important goals)
and Theorem \ref{thm:pref} (most preferred non-conflicting recommendations are followed); 
\textit{individualised management} is ensured collectively by the above and the use of the patient context in \abapg\ (Definition \ref{defn: abapg patient framework}).
To the best of our knowledge, our work is unique in establishing a relationship between features of argumentative reasoning and principles of patient management. 
This hints at the adequacy of structured argumentation for this type of task, 
which we believe is important given the difficulty (both in terms of time and resources) of large-scale practical evaluations in a real setting.

In addition to several future work directions mentioned in Section \ref{sec:related}, 
we will extend our work to other interaction types identified in \cite{Zamborlini.et.al:2017}: repetition, alternative, etc.
We will also aim to incorporate various numerical measures from TMR, such as belief strength. 
This may yield additional preferences, 
and we will study how multiple types of possibly conflicting preferences can be simultaneously integrated in \abapg. 
Preference elicitation is a vast problem by itself, and we will explore integration with the relevant works, e.g.~\cite{Sacchi.et.al:2015}.
Last but not least, argumentation is well-suited for explanations, see e.g.~\cite{Moulin.et.al:2002,Atkinson.et.al:2017}, 
and we will study both the well-established and novel \abap\ mechanisms to explain to the patient or the clinician how and why \abapg\ arrives at the final recommendations.

\subsubsection*{Acknowledgements}

Kristijonas \v Cyras was supported by the EPSRC project 
\textbf{EP/P029558/1} ROAD2H: 
\emph{Resource Optimisation, Argumentation, Decision Support and Knowledge Transfer to Create Value via Learning Health Systems.} 
Tiago Oliveira was supported by JSPS KAKENHI Grant Number \textbf{JP18K18115}.

\noindent
\textbf{Data access statement}: 
No new data was collected in the course of this research.

\bibliographystyle{ACM-Reference-Format}  
\balance
\bibliography{references}

\end{document}